\newcolumntype{M}[1]{>{\centering\arraybackslash}m{#1}}
\newtheorem{thm}{Theorem}
\newtheorem{cor}{Corollary}
\algnewcommand\algorithmicforeach{\textbf{for each}}
\begin{document}

\title{Technical Report for Real-Time Certified Probabilistic Pedestrian Forecasting}

\author{Henry O. Jacobs, Owen K. Hughes, Matthew Johnson-Roberson, and Ram Vasudevan
 \thanks{ H.O. Jacobs, O. Hughes, M. Johnson-Roberson, and R.~Vasudevan are with the University of Michigan, Ann Arbor, MI 48109
{\scriptsize \{\texttt{hojacobs,owhughes,mattjr,ramv}\}~\texttt{@umich.edu}}}
\thanks{This material is based upon work supported by Ford Motor Company and the National Science Foundation under Grant No. 1562612.}
}


\maketitle

\begin{abstract}
The success of autonomous systems will depend upon their ability to safely navigate human-centric environments.
This motivates the need for a real-time, probabilistic forecasting algorithm for pedestrians, cyclists, and other agents since these predictions will form a necessary step in assessing the risk of any action.
This paper presents a novel approach to probabilistic forecasting for pedestrians based on weighted sums of ordinary differential equations that are learned from historical trajectory information within a fixed scene.
The resulting algorithm is embarrassingly parallel and is able to work at real-time speeds using a naive Python implementation.
The quality of predicted locations of agents generated by the proposed algorithm is validated on a variety of examples and considerably higher than existing state of the art approaches over long time horizons.
\end{abstract}


\section{Introduction}

Autonomous systems are increasingly being deployed in and around humans. 
The ability to accurately model and anticipate human behavior is critical to maximizing safety, confidence, and effectiveness of these systems in human-centric environments. 
The stochasticity of humans necessitates a probabilistic approach to capture the likelihood of an action over a set of possible behaviors. 
Since the set of plausible human behaviors is vast, this work focuses on anticipating the possible future locations of pedestrians within a bounded area. 
This problem is critical in many application domains such as enabling personal robots to navigate in crowded environments, managing pedestrian flow in smart cities, and synthesizing safe controllers for autonomous vehicles (AV).  

With a particular focus on the AV application several additional design criteria become important. 
First, false negative rates for unoccupied regions must be minimized.
The misclassification of space in this way has obvious safety issues.
Second, speed is paramount.
To effectively use human prediction within a vehicle control loop prediction rates must be commensurate with the speed at which humans change trajectories.
Finally, long-time horizon forecasting is preferable since this improves robot predictability, reduces operation close to safety margins, prevents the need for overly conservative or aggressive controllers and makes high-level goal planning more feasible.
This paper presents an algorithm for real-time, long-term prediction of pedestrian behavior which can subsequently be used by autonomous agents.
As depicted in Figure \ref{fig:gates-1-2}, this method works quickly to generate predictions that are precise while reducing the likelihood of false negative detections.

\begin{figure}
	\centering
	\begin{subfigure}[b]{.45\linewidth}
		\includegraphics[width=\linewidth]{./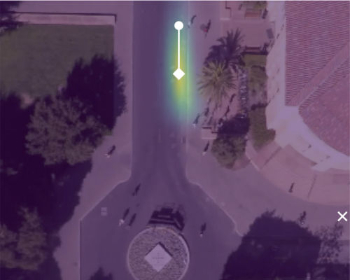}
		\caption{$t=1.83s$}
	\end{subfigure}
		\begin{subfigure}[b]{.45\linewidth}
			\includegraphics[width=\linewidth]{./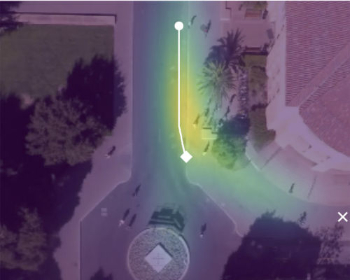}
			\caption{$t=4.83$}
		\end{subfigure}
	
	\begin{subfigure}[b]{.45\linewidth}
		\includegraphics[width=\linewidth]{./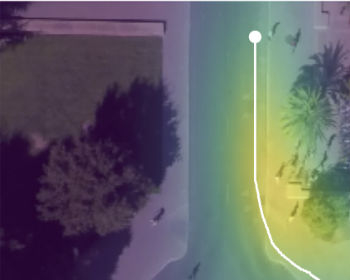}
		\caption{$t=7.83s$}
	\end{subfigure}
	\begin{subfigure}[b]{.45\linewidth}
		\includegraphics[width=\linewidth]{./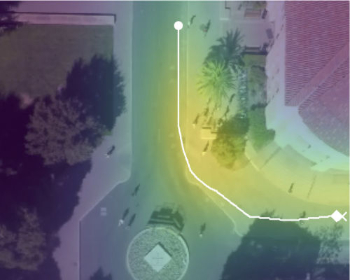}
		\caption{$t=11.5s$}
	\end{subfigure}
	\caption{The performance of the presented algorithm captures the most probable routes that a pedestrian chooses. The dot is the starting point of the trajectory, the diamond is the position at time $t$, and the X is the end of the trajectory. The likelihood of detection is depicted using the \textit{virdis} color palette. The presented algorithm took $0.00465$s per frame in a Python implementation. }
	\label{fig:gates-1-2}
	\vspace*{-0.7cm}
\end{figure}


\subsection{Background}
Most forecasting algorithm are well characterized by the underlying evolution model they adopt.
Such models come in a variety flavors, and are adapted to the task at hand (e.g. crowd modeling \cite{Helbing1992}).
This paper is focused on the construction of useful motion models for pedestrians that can aide the task of real-time forecasting for autonomous agents.
The simplest approach to forecasting with motion models forward integrates a Kalman filter based upon the observed heading.
Over short time scales this method may perform well, but the resulting distribution devolves into an imprecise Gaussian mass over longer time scales.
In particular, such models are less useful for forecasts beyond two seconds, especially when a pedestrian turns.
Nonetheless, these stochastic linear models serve as a reasonable default in the absence of any contextual knowledge.

More sophisticated models that attempt to leverage environmental data include Inverse Optimal Control (IOC) based models \cite{Ziebart2008,Ziebart2009,Kitani2012,Xie2013,Karasev2016}.
These IOC models have the desirable property of attributing intention and goal-seeking behavior to the agents.
For example, \cite{Kitani2012} extracts a Markov Decision Process (MDP) evolving on a finite 2D lattice by training on a small collection of features and trajectories.
The resulting MDP is light-weight since it is parametrized by only a small collection of coefficients equal in number to that of the feature maps.
Moreover, given an initial and final state, the probability distribution of the pedestrian at intermediate states is computable using matrix multiplication.
The speed of this computation, makes the algorithm of \cite{Kitani2012} a reasonable baseline for comparison for the algorithm that is presented in this paper.

This approach has been generalized in a variety of ways.
For example, time-dependent information, such as traffic signals, are incorporated in \cite{Karasev2016}, by relaxing the Markov property and considering a switched Markov process.
Other generalizations include replacing the finite-state space with a continuous one, and using a Markov jump process in the motion model.
Unfortunately the desired posteriors are difficult to compute in closed form, and as a result use sampling based methods.
The resulting accuracy of such methods, which can come at a large computational expense, can only be known in a probabilistic sense in that the error bounds are themselves random variables.

A limitation of IOC models occurs when there are locally optimal solutions between a start and end goals that can yield non-robust and imprecise behavior.
This occurs when agents make sharp turns due to intermediate criteria on the way toward reaching their final destination.
To address this, \cite{Ballan2016} adopt an empiricists approach, computing ``turning maps'' and attempting to infer how agents behave in a given patch.
The motion model is a Markov jump process and the relevant posteriors are approximated using sample based techniques similar to \cite{Karasev2016}.
The objective of \cite{Ballan2016} is not only prediction, but the development of a motion model learned on one scene that could then subsequently be transferred to other scenes. 
This requires representations of ``objects'' in the scene that do not depend rigidly on the finitely many labels an engineer managed to think of in a late-night brainstorming session.

Recent work has focused on constructing an unsupervised approach towards forecasting  \cite{Walker2014}.
Unlike all the approaches mentioned thus far, the agents in \cite{Walker2014} were not manually specified.
They were learned by detecting which sort of patches of video were likely to move, and how.
The resulting predictions outperformed \cite{Kitani2012} when comparing the most likely path with the ground truth using the Modified Hausdorff Distance.
As in all methods mentioned thus far, computational speed and accuracy of any predicted posteriors were not a concern, so no such results were reported.
However, since the motion model was a Markov jump process which required the application of a sample based technique, we should expect the same painful trade-off between error and speed to occur as in \cite{Karasev2016,Ballan2016}.

Many have approached pedestrian forecasting by deriving their motion model from interactions between pedestrians using physically motivated methods \cite{Helbing1995,Xu2012}. 
	Several models derive their motion models from \cite{Helbing1995} by incorporating collision avoidance through an interaction potential \cite{Pellegrini2009,Yamaguchi2011,Yi2016}. 
	However this method suffers from not planning for other pedestrian positions at future times. 
	Others take optical flow as input \cite{Hospedales2009,Wang2009,Emonet2011}.
	These approaches use variants of Hierarchical Dirichlet Processes on discretized optical flow to determine temporal motifs (i.e. classes of motion within the scene), or on a Markov model. 
	 These models are not agent based, and the lack of an explicit motion model limits their predictive power. 
	Recently, methods have been developed to predict trajectories by introducing and sampling Anticipatory Temporal Conditional Random Fields which incorporate learned affordances based on observed objectives within the scene \cite{Koppula2016}.
	 Others create agent-based models based on Gaussian Processes which perform poorly when trained on discretized trajectories \cite{Tay2008,Wang2008,Trautman2015}.
	Most recently, a method using Long Short-Term Memory (LSTM) was proposed to learn pedestrian motion models without making assumptions about the manner in which agents interact while having a rapid computation time \cite{Alahi2016}.

\subsection{Contributions}

The primary contributions of this paper are three-fold: first, an accurate motion model for pedestrian forecasting, second, an expedient method for computing approximations of relevant posteriors generated by our motion model, and finally hard error bounds on the proposed approximations.
The method proposed by this paper is able to work three times faster than the existing state of the art while improving upon its performance over long time horizons.
For clarification, we should mention that there are a number of things that we do not do.
For example, we do not concern ourselves with detection and tracking. 
Nor do we concern ourselves with updating our prediction as new data comes along.
We largely work in a 2D environment with a bird's eye view, operating under the assumption that the data has been appropriately transformed by a third party.
While it would be a straight forward extension to consider a first person perspective, it would detract from the presentation.

The rest of the paper is organized as follows: \S \ref{sec:model} describes our motion model as a Bayesian network, \S \ref{sec:efficient} describes how to compute probability densities for an agent's position efficiently, and \S \ref{sec:implementation} demonstrates the model by training and testing it on the Stanford drone dataset \cite{Robicquet2016}.

\section{Model}\label{sec:model}
This paper's goal is to generate a time-dependent probability density over $\mathbb{R}^2$, which predicts the true location of an agent in the future.
The input to the algorithm at runtime is a noisy measurement of position and velocity, $\hat{x}_0, \hat{v}_0 \in \mathbb{R}^2$.
If the (unknown) location of agent at time $t$ is given by $x_t \in \mathbb{R}^2$, then the distribution we seek is the posterior $\rho_t(x_t) := \Pr( x_t \mid \hat{x}_0, \hat{v}_0 )$ for each time $t \in \{\Delta t, \dots, N_t \Delta t \}$ for some user-specified $N_t \in \mathbb{N}$ and $\Delta t \in \mathbb{R}$.

To compute $\rho_t$, we build a probabilistic graphical model, which is composed of three parts:
\begin{enumerate}
	\item Reality: This is parametrized by the true position for all time, $x_t$, and the initial velocity of the agent $v_0$.
	\item Measurements:  This is represented by our sensor readings $\hat{x}_0$ and $\hat{v}_0$ and are independent of all other variables given the true initial position and velocity, $x_o, v_0$.
	\item Motion Model: This is represented by a trajectory $\check{x}_t$ and depends on a variety of other variables.
\end{enumerate}
We elaborate on these three components next.

\subsection{The Variables of the Model}
The model concerns the position of an agent $x_t \in \mathbb{R}^2$ for $t \in [0,N_t \Delta t]$.
We denote the position and velocity at time $t=0$ by $x_0$ and $v_0$ respectively.
At $t=0$, we obtain a measurement of position and velocity, denotes by $\hat{x}_0$ and $\hat{v}_0$.
Lastly, we have a variety of motion models, parametrized by a set $\mathcal{M}$ (described in the sequel).
For each model $m \in \mathcal{M}$, a trajectory $\check{x}_t$ given the initial position and velocity $x_0$ and $v_0$.
All these variables are probabilistically related to one another in a (sparse) Bayesian network, which we will describe next.

\subsection{The Sensor Model}
At time $t=0$, we obtain a noisy reading of position, $\hat{x}_0 \in \mathbb{R}^2$.
We assume that given the true position, $x_0 \in \mathbb{R}^2$, that the measurement $\hat{x}_0$ is independent of all other variables and the posterior $\Pr( \hat{x}_0 \mid x_0)$ is known.
We assume a similar measurement model for the measured initial velocity $\hat{v}_0$.

\subsection{The Agent Model}
All agents are initialized within some rectangular region $D \subset \mathbb{R}^2$.
We denote the true position of an agent by $x_t$.
We should never expect to know $x_t$ and the nature of its evolution precisely, and any model should account for its own (inevitable) imprecision.
We do this by fitting a deterministic model to the data and then smoothing the results.
Specifically, our motion model consists of a modeled trajectory $\check{x}_t$, which is probabilistically related to the true position by $x_t$ via a known and easily computable posterior, $\Pr(x_t \mid \check{x}_t)$.

Once initialized, agents come in two flavors: linear and nonlinear.
The linear agent model evolves according to the equation $\check{x}_t = x_0 + t v_0$ and so we have the posterior:
\begin{align}
	\Pr( \check{x}_t  \in A \mid x_0, v_0, lin) = \int_A \delta( \check{x}_t - x_0 - t v_0 ) d\check{x}_t.
\end{align}
for all measurable sets $A \subset \mathbb{R}^2$, where $\delta( \cdot )$ denotes the multivariate Dirac-delta distribution.
For the sake of convenience, from here on we drop the set $A$ and the integral when defining such posteriors since this equation is true for all measurable sets $A$.
We also assume the posteriors, $\Pr(x_0 \mid lin)$ and $\Pr( v_0 \mid lin, x_0)$ are known.

If the agent is of nonlinear type, then we assume the dynamics take the form:
\begin{align}n
	\frac{d \check{x}_t}{dt} = s \cdot X_k(\check{x}_t) \label{eq:ode}
\end{align}
where $X_k$ is a vector-field\footnote{A vector-field is an assignment of a velocity to each position in some space.  A vector-field on $\mathbb{R}^n$ is a map from $\mathbb{R}^n \to \mathbb{R}^n$.} drawn from a finite collection $\{X_1, \dots, X_n\}$, and $s \in \mathbb{R}$.
More specifically, we assume that each $X_k$ has the property that $\| X_k(x) \| = 1$ for all $x \in D$.
This property ensures that speed is constant in time.
As we describe in \S \ref{sec:implementation}, the stationary vector-fields $X_1,\dots,X_n$ are learned from the dataset.

It is assumed that $k$ and $s$ are both constant in time, so that $\check{x}_t$ is determined from the triple $(x_0,k,s)$ by integrating \eqref{eq:ode} with the initial condition $x_0$.
This insight allows us to use the motion model to generate the posterior for $\Pr( \check{x}_t \mid x_0, k, s)$.
For each initial condition, $x_0$, we can solve \eqref{eq:ode} as an initial value problem, to obtain a point $\check{x}_t$ with initial condition $\check{x}_0 = x_0$.
This process of solving the differential equation takes an initial condition, $\check{x}_0$, and outputs a final condition, $\check{x}_t$.
This constitutes a map which is termed the \emph{flow-map} \cite[Ch 4]{MTA}, and which we denote by $\Phi_{k,s}^t$.
Explicitly, we have the posterior:
\begin{align}
	\Pr( \check{x}_t \mid x_0 , k , s) = \delta( \check{x}_t - \Phi^{t}_{k,s}( x_0) )  d\check{x}_t \label{eq:x_check | ksx}
\end{align}
where $\Phi^{t}_{k,s}$ is the flow-map of the vector field $s \,X_k$ up to time $t$.
Note that this flow-map can be evaluated for an initial condition by just integrating the vector field from that initial condition.
Note the variables $k,s$ and $x_0$ determine $v_0$.
Thus we have the posterior:
\begin{align}
	\Pr( v_0 \mid k, s, x_0) = \delta( v_0 -s X_k( x_0) ) dv_0. \label{eq:v | ksx}
\end{align}
In summary, the agent models are parametrized by the set $\mathcal{M} = \{ lin \} \cup \left( \mathbb{R} \times \{ 1 , \dots, n \} \right)$ whose elements determine the type of agent motion.

\subsection{The Full Model}
Concatenating the measurement model with our motion models yields the Bayesian network, where $M \in \mathcal{M}$ denotes the model of the agent:
\begin{align}
\begin{tikzpicture}[thick, var/.style={circle,draw,thin,rounded corners,shade,top color=blue!50,minimum size = 2mm}]
	\node[var] (M) {$M$};
	\node[var] (x)[right=of M] {$x_0$};
	\node[var] (v)[below=of x] {$v_0$};
	\node[var] (x_hat) [right=of x] {$\hat{x}_0$};
	\node[var] (v_hat) [right=of v] {$\hat{v}_0$};
	\node[var] (x_check_t) [left=of v] {$\check{x}_t$};
	\node[var] (x_t) [left=of x_check_t] {$x_t$};
	\draw[->] (M) to (x);
	\draw[->] (M) to (v);
	\draw[->] (M) to (x_check_t);
	\draw[->] (x) to (x_hat);
	\draw[->] (x) to (x_check_t);
	\draw[->] (x) to (v);
	\draw[->] (v) to (x_check_t);
	\draw[->] (v) to (v_hat);
	\draw[->] (x_check_t) to (x_t); 
\end{tikzpicture}.\label{eq:pgm}
\end{align}
We use this Bayesian network to compute $\rho_t$.
In particular
\begin{align}
	&\rho_t(x_t ) := \Pr( x_t \mid \hat{x}_0, \hat{v}_0 ) \\
	&=\sum_{k} \int \Pr( x_t, k , s  \mid \hat{x}_0, \hat{v}_0 ) ds  + \Pr( x_t, lin \mid \hat{x}_0, \hat{v}_0 ). \label{eq:decomposition}
\end{align}
$\Pr( x_t, lin \mid \hat{x}_0, \hat{v}_0 )$ is expressible in closed form when the posteriors $\Pr( x_0 \mid lin)$ and $\Pr( v_0 \mid lin,x_0)$ are known. 
In this instance, the numerical computation of $\Pr(x_t, lin \mid \hat{x}_0, \hat{v}_0)$ poses a negligible burden and the primary computational burden derives from computing $\sum_{k} \int \Pr( x_t, k , s  \mid \hat{x}_0, \hat{v}_0 ) ds$.

\section{Efficient Probability Propagation} \label{sec:efficient}
This section details how the modeling of the agent's motion as satisfying an ODE can be leveraged to compute $\rho_t(x_t)$ quickly and accurately.
To begin, rather than focusing on computing $\rho_t(x_t)$, we describe how to compute the joint probability $\Pr( x_t , \hat{x}_0, \hat{v}_0)$.
We can obtain $\rho_t(x_t)$ by normalizing $\Pr( x_t , \hat{x}_0, \hat{v}_0)$ with respect to integration over $x_t$.
We can approximate the integration over $s$ in \eqref{eq:decomposition}, with a Riemann sum.

Let us assume that $\Pr(s \mid k)$ is compactly supported for all $k = 1,\dots,n$, and the supported is always contained
in some interval $[ - \bar{s} , \bar{s} ]$ for some $\bar{s} > 0$.
Given a regular partition $\{ s_0, s_1, \dots, s_n \}$ of step-size $\Delta s > 0$ on $[-\bar{s}, \bar{s}]$,  we can conclude that the integral term in \eqref{eq:decomposition} is approximated by
\begin{align}
	\begin{split}
	&\sum_{k} \int \Pr( x_t, k , s , \hat{x}_0, \hat{v}_0 ) ds = \\
	 &\underbrace{\Delta s \sum_{j} \sum_{k} \Pr( x_t, k , s_j , \hat{x}_0, \hat{v}_0)}_{\text{approximation}}
	  +
	  \underbrace{\varepsilon_s}_{\text{error}}
	 \end{split} \label{eq:approximation 0}
\end{align}
where the error is bounded by $\int| \varepsilon_s | ds \leq  TV(x_t, \hat{x}_0, \hat{v}_0) \Delta s$
where $TV(x_t, \hat{x}_0, \hat{v}_0)$ is the sum, with respect to $k$, of the total variation of $\Pr( x_t, k , s, \hat{x}_0, \hat{v}_0 )$ with respect to $s$ for fixed $x_t, \hat{x}_0, \hat{v}_0$.
Since this error term can be controlled, the problem of solving $\rho_t(x_t)$ is reduced to that of efficiently computing $ \Pr( x_t, k , s_j, \hat{x}_0, \hat{v}_0)$
for a fixed collection of $s_j$'s.

$\hat{x}_0$ and $\hat{v}_0$ are measured and are assumed fixed for the remainder of this section.
To begin, from \eqref{eq:pgm}  notice that:
\begin{align}
	\Pr( x_t, k,s,\hat{x}_0, \hat{v}_0) &= \int \Pr( x_t, \check{x}_t , \hat{x}_0, \hat{v}_0, k,s) d\check{x}_t  \label{eq:convolve} \\
	&= \int \Pr( x_t \mid \check{x}_t ) \Pr(\check{x}_t , \hat{x}_0, \hat{v}_0, k,s) d\check{x}_t \nonumber
\end{align}
Observe that from the last line that $\Pr( x_t, k,s, \hat{x}_0, \hat{v}_0)$ is a convolution of the joint distribution $\Pr( \check{x}_t , \hat{x}_0, \hat{v}_0, k,s)$.
Assuming, for the moment, that such a convolution can be performed efficiently, we focus on computation of $\Pr( \check{x}_t , \hat{x}_0, \hat{v}_0, k,s)$.
Again, \eqref{eq:pgm} implies:
\begin{align}
	&\Pr( \check{x}_t , \hat{x}_0, \hat{v}_0, k,s) =\int \Pr( \check{x}_t , x_0, \hat{x}_0, v_0, \hat{v}_0, k,s) dx_0 \, dv_0 \nonumber \\
	&= \int \Pr( \check{x}_t \mid  x_0, k,s, v_0) \Pr( \hat{v}_0 \mid v_0) \cdot \\
	& \hspace{10ex}  \cdot \Pr( v_0 \mid k,s,x_0) \Pr(\hat{x}_0, x_0, k, s) dx_0 \, dv_0 \nonumber \\
		&= \int \delta\left( \check{x}_t - \Phi_{k,s}^{t}( x_0) \right) \delta\left( v_0 - s X_k (x_0) \right) \cdot \\
		&\hspace{10ex} \cdot\Pr( \hat{v}_0 \mid v_0) \Pr(\hat{x}_0, x_0,k, s) dx_0\, dv_0, \nonumber
\end{align}
where the last equality follows from substituting \eqref{eq:x_check | ksx} and \eqref{eq:v | ksx}.
 Carrying out the integration over $v_0$ we observe:
\begin{align}
\begin{split}
	&\Pr( \check{x}_t , \hat{x}_0, \hat{v}_0, k,s) = \int \delta\left( \check{x}_t - \Phi_{k,s}^{t}( x_0) \right)  \cdot \\
	&\hspace{10ex} \cdot \Pr(\hat{x}_0, x_0, k, s) \Psi(\hat{v}_0 ;k,s,x_0) dx_0,
\end{split}
\label{eq:push forward}
\end{align}
 where $\Psi( \hat{v}_0 ;k,s,x_0) := \left. \Pr( \hat{v}_0 \mid v_0) \right|_{v_0 = s X_k(x_0)}$.
 We may approximate $\Pr(\hat{x}_0, x_0, k, s) \Psi( \hat{v}_0 ; k, s, x_0)$ as a sum of weighted Dirac-delta distributions supported on a regular grid, since $\Pr(\hat{x}_0, x_0, k, s) \Psi( \hat{v}_0 ; k, s, x_0)$  is of bounded variation in the variable $x_0$ (with all other variables held fixed).
 
 To accomplish this, let $S_L(\hat{x}_0)$ denote the square of side length $L>0$ centered around $\hat{x}_0$.
 Choose  $L>0$ to be such that $\int_{S_L(\hat{x}_0)} \Pr( x_0 \mid \hat{x}_0) dx_0 = 1 - \varepsilon_{tol}$ for some error tolerance $\varepsilon_{tol}>0$.
 Then, for a given resolution $N_x \in \mathbb{N}$ define the regular grid on $S_L(\hat{x}_0)$ as $\Gamma_L( \hat{x}_0 ; N_x ) := \left\{ x_0^{i,j} \mid i,j \in \{ -N_x,\dots,N_x \} \right\}$, where $x_0^{i,j} = \hat{x}_0 + \frac{L}{2N_x}(i,j)$.
 The grid spacing is given by $\Delta x = ( \frac{L}{2N_x}, \frac{L}{2N_x} )$.
We approximate the smooth distribution $\Pr(\hat{x}_0, x_0, k, s) \Psi( \hat{v}_0 ; k, s, x_0 )$ as a weighted sum of Dirac-deltas (in the variable $x_0$) supported on $\Gamma_L( \hat{x}_0;N)$:
\begin{align}
		&\Pr(\hat{x}_0, x_0, k, s) \Psi( \hat{v}_0 ; k, s, x_0 ) = \nonumber \\
		& \underbrace{\left( \sum_{i,j=-N}^{N} W(k,s,i,j,\hat{x}_0) \delta( x_0 - x_0^{i,j} ) \right)}_{\text{approximation}}
		+ \underbrace{ \varepsilon_0(x_0) }_{\text{error}}
	\label{eq:approximation 1}
\end{align}
where $W(k,s,i,j,\hat{x}_0)$ is the evaluation of $\Pr(\hat{x}_0, x_0, k, s) \Psi( \hat{v}_0 ; k, s, x_0 )$ at the grid point $x_0 = x_0^{i,j} \in \Gamma( \hat{x}_0; N)$.
More explicitly, this evaluation can be done for each grid point by using only the assumed posterior models in \eqref{eq:pgm}.
For fixed $k$ and $s$, the expression $\Pr(\hat{x}_0, x_0, k, s) \Psi( \hat{v}_0 ; k, s, x_0 )$ is a density in $x_0$ and the error term in \eqref{eq:approximation 1} has a magnitude of $\| \varepsilon_0 \|_{L^1} \sim \mathcal{O}( | \Delta x |  + \varepsilon_{tol} )$ with respect to the $L^1$-norm in $x_0$.

Substitution of \eqref{eq:approximation 1} into the final line of \eqref{eq:push forward} yields:
\begin{align}
	\begin{split}
	&\Pr( \check{x}_t , \hat{x}_0, \hat{v}_0, k,s) = \\
	&\quad \sum_{i,j} W(k,s,i,j,\hat{x}_0) \delta \left( \check{x}_t - \Phi_{k,s}( x_0^{i,j}) \right) + \varepsilon_t( \check{x}_t)
	\end{split}
	\label{eq:makesense}
\end{align}
where $\varepsilon_t( \check{x}_t) = \int \delta\left( \check{x}_t - \Phi_{k,s}^{t}( x_0) \right)  \varepsilon_0(x_0) dx_0$.
The first term of the right hand side of \eqref{eq:makesense} is computable by flowing the points of the grid, $\Gamma_L(\hat{x}_0; N_x)$, along the vector field $s X_k$.
The second term, $\varepsilon_t$, may be viewed as an error term.
In fact, this method of approximating $\Pr( \check{x}_t , \hat{x}_0, \hat{v}_0, k,s)$ as a sum of Dirac-delta distributions is adaptive, in that the error term does not grow in total mass, which is remarkable since many methods for linear evolution equations accumulate error exponentially in time \cite{Leveque1992,Gottlieb2001}:

\begin{thm} \label{thm:error}
	The error term, $\varepsilon_t \sim \mathcal{O}( | \Delta x | + \varepsilon_{tol} )$ in the $L^1$-norm, for fixed $k,s,\hat{x}_0$, and $\hat{v}_0$.
	Moreover, $\| \varepsilon_t \|_{L^1}$ is constant in time.
\end{thm}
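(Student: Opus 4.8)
The plan is to recognize that $\varepsilon_t$ is nothing but the pushforward of the initial error density $\varepsilon_0$ along the flow-map $\Phi^t_{k,s}$, and that pushforward by a diffeomorphism preserves the $L^1$-norm exactly. Concretely, the defining integral $\varepsilon_t(\check{x}_t) = \int \delta(\check{x}_t - \Phi^t_{k,s}(x_0))\,\varepsilon_0(x_0)\,dx_0$ is the integral representation of the pushforward measure $(\Phi^t_{k,s})_*(\varepsilon_0\,dx_0)$. As a preliminary, I would record the regularity needed: assuming each $X_k$ is $C^1$ on a neighborhood of the support of $\varepsilon_0$, standard ODE theory (existence, uniqueness, and smooth dependence on initial conditions) guarantees that for each fixed $t$ in the horizon the flow-map $\Phi^t_{k,s}$ is a $C^1$ diffeomorphism onto its image with $C^1$ inverse $\Phi^{-t}_{k,s}$. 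The unit-speed normalization $\|X_k\| = 1$ bounds the field and rules out finite-time escape, so the flow is defined over all of $[0, N_t \Delta t]$.

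Next I would evaluate the integral using the multivariate Dirac-delta composition rule. Writing $g(x_0) = \check{x}_t - \Phi^t_{k,s}(x_0)$, whose unique zero is $x_0 = \Phi^{-t}_{k,s}(\check{x}_t)$ with $|\det Dg| = |\det D\Phi^t_{k,s}|$, yields the explicit density
\[
\varepsilon_t(\check{x}_t) = \frac{\varepsilon_0\big(\Phi^{-t}_{k,s}(\check{x}_t)\big)}{\big|\det D\Phi^t_{k,s}\big(\Phi^{-t}_{k,s}(\check{x}_t)\big)\big|}.
\]

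The crux is then a change of variables: substituting $x_0 = \Phi^{-t}_{k,s}(\check{x}_t)$, so that $d\check{x}_t = |\det D\Phi^t_{k,s}(x_0)|\,dx_0$, the Jacobian in the denominator cancels exactly against the Jacobian from the volume element, giving $\|\varepsilon_t\|_{L^1} = \int |\varepsilon_0(x_0)|\,dx_0 = \|\varepsilon_0\|_{L^1}$. This identity is independent of $t$, which proves constancy in time; combined with the bound $\|\varepsilon_0\|_{L^1} \sim \mathcal{O}(|\Delta x| + \varepsilon_{tol})$ established after \eqref{eq:approximation 1}, it also delivers the order estimate. The conceptual point worth emphasizing is that $L^1$-preservation does \emph{not} require the flow to be volume-preserving (divergence-free); the cancellation of Jacobians is automatic for any diffeomorphic pushforward, which is exactly why no exponential-in-time error accumulation occurs here, in contrast to the grid-based linear-evolution schemes cited.

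The main obstacle I anticipate is technical rather than conceptual: justifying that $\Phi^t_{k,s}$ is a genuine diffeomorphism on the (possibly non-compactly-supported, owing to the $\varepsilon_{tol}$ tail) support of $\varepsilon_0$, and that the delta-composition step remains valid when the preimage can reach the boundary of $D$ where the unit-speed assumption lapses. I would handle this by restricting the argument to the region on which the flow is well-defined and absorbing the mass of any escaping tail into the $\varepsilon_{tol}$ contribution, so that the clean identity $\|\varepsilon_t\|_{L^1} = \|\varepsilon_0\|_{L^1}$ holds up to the same order as the stated bound.
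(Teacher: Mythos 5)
Your proposal is correct and takes essentially the same route as the paper's own proof: evaluate the Dirac-delta integral via the composition rule, then change variables so the Jacobian factors cancel, giving $\| \varepsilon_t \|_{L^1} = \| \varepsilon_0 \|_{L^1}$ for all $t$, and conclude from the $\mathcal{O}( |\Delta x| + \varepsilon_{tol} )$ bound on $\varepsilon_0$. If anything, your version is slightly more careful than the paper's: you correctly place $\left| \det D\Phi_{k,s}^{t} \right|$ in the denominator of the pushforward density (the paper's displayed computation has the determinant in the numerator, a typo that cancels anyway in the change of variables), and you make explicit the diffeomorphism and regularity hypotheses on the flow-map that the paper leaves implicit.
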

\begin{proof}
	To declutter notation, let us temporarily denote $\Phi_{k,s}^t$ by $\Phi$.
	We observe
\begin{align*}
	\| \varepsilon_t \|_{L^1} &= \int \left| \int \delta( \check{x}_t - \Phi(x_0) ) \varepsilon_0(x_0) dx_0 \right| d\check{x}_t \\
	&= \int \det( \left. D\Phi \right|_{\Phi^{-1}( \check{x}_t) }) |\varepsilon_{0}( \Phi^{-1}( \check{x}_t )) | d\check{x}_t \\
	&= \int | \varepsilon_0( u) | du = \| \varepsilon_0 \|_{L^1}
\end{align*}
As $\varepsilon_0$ is of magnitude $\mathcal{O}( |\Delta x| + \varepsilon_{tol} )$ the result follows.
\end{proof}

While this allows us to compute posteriors over the output of our models, $\check{x}_t$, we ultimately care about densities over the true position.
The following corollary of Theorem \ref{thm:error} addresses this:
\begin{cor} \label{cor:error}
	The density
	\begin{align}
		\sum_{i,j} W(k,s,i,j,\hat{x}_0) \left. \Pr( x_t \mid \check{x}_t ) \right|_{ \check{x}_t = \Phi_{k,s}^t( x_0^\alpha) } \label{eq:approximation 2}
	\end{align}
	is an approximation of $\Pr( x_t, k, s, \hat{x}_0, \hat{v}_0)$ with a constant in time error bound of magnitude $\mathcal{O}( |\Delta x| + \varepsilon_{tol} )$.
\end{cor}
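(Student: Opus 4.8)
The plan is to substitute the expansion \eqref{eq:makesense} supplied by Theorem \ref{thm:error} directly into the convolution representation \eqref{eq:convolve}, and then track the two resulting pieces separately. Writing $\Pr(\check{x}_t, \hat{x}_0, \hat{v}_0, k, s)$ as the sum of its weighted Dirac-delta approximation and the residual $\varepsilon_t$, and using linearity of the integral against the kernel $\Pr(x_t \mid \check{x}_t)$, I expect the leading term to collapse via the sifting property of the Dirac delta: each $\delta(\check{x}_t - \Phi_{k,s}^t(x_0^{i,j}))$ evaluates $\Pr(x_t\mid\check{x}_t)$ at $\check{x}_t = \Phi_{k,s}^t(x_0^{i,j})$, reproducing exactly the weighted sum \eqref{eq:approximation 2} (with $\alpha$ abbreviating the index pair $(i,j)$). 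What remains is the residual
\begin{align*}
    \tilde{\varepsilon}_t(x_t) := \int \Pr(x_t \mid \check{x}_t)\, \varepsilon_t(\check{x}_t)\, d\check{x}_t,
\end{align*}
so the corollary reduces to showing that $\|\tilde{\varepsilon}_t\|_{L^1}$ inherits both the magnitude and the time-invariance of $\|\varepsilon_t\|_{L^1}$.

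This second step is the crux, and it rests on the observation that integrating against $\Pr(x_t \mid \check{x}_t)$ is a stochastic (Markov) kernel and hence nonexpansive on $L^1$. Concretely, I would bound
\begin{align*}
    \|\tilde{\varepsilon}_t\|_{L^1} = \int \left| \int \Pr(x_t\mid\check{x}_t)\,\varepsilon_t(\check{x}_t)\, d\check{x}_t \right| dx_t \leq \int\!\!\int \Pr(x_t\mid\check{x}_t)\, |\varepsilon_t(\check{x}_t)|\, d\check{x}_t\, dx_t,
\end{align*}
using the triangle inequality together with the nonnegativity of the conditional density. Then by Tonelli's theorem I would exchange the order of integration and apply the normalization $\int \Pr(x_t\mid\check{x}_t)\, dx_t = 1$ (valid since $\Pr(\cdot \mid \check{x}_t)$ is a probability density in $x_t$) to collapse the inner integral to unity, which yields $\|\tilde{\varepsilon}_t\|_{L^1} \leq \|\varepsilon_t\|_{L^1}$.

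Finally, I would invoke Theorem \ref{thm:error}, which asserts that $\|\varepsilon_t\|_{L^1} \sim \mathcal{O}(|\Delta x| + \varepsilon_{tol})$ and is constant in time, to transfer the identical bound to $\tilde{\varepsilon}_t$ and hence to the approximation error of \eqref{eq:approximation 2}. The main obstacle here is not analytic difficulty but rather correctly recognizing that the smoothing operation $\Pr(x_t\mid\check{x}_t)$ cannot amplify the $L^1$ error: the nonnegativity and unit-mass properties of the kernel are precisely what guarantee nonexpansiveness, and it is this, combined with the time-invariance of $\|\varepsilon_t\|_{L^1}$ already established in Theorem \ref{thm:error}, that delivers an error bound uniform in $t$.
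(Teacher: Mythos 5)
Your proposal is correct and follows essentially the same route as the paper's proof: substitute the Dirac-delta expansion \eqref{eq:makesense} into the convolution \eqref{eq:convolve}, isolate the residual $\tilde{\varepsilon}_t(x_t) = \int \Pr(x_t \mid \check{x}_t)\,\varepsilon_t(\check{x}_t)\,d\check{x}_t$, and show via the triangle inequality, Tonelli, and the unit mass of the kernel $\Pr(x_t \mid \check{x}_t)$ that $\|\tilde{\varepsilon}_t\|_{L^1} \leq \|\varepsilon_t\|_{L^1}$, then invoke Theorem \ref{thm:error}. Your citation of \eqref{eq:makesense} as the substituted expansion is in fact more precise than the paper's own wording, which refers to \eqref{eq:approximation 1} even though the error term it uses is the $\varepsilon_t$ defined in \eqref{eq:makesense}.
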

\begin{proof}
	By \eqref{eq:convolve}
	\begin{align*}
		\Pr(x_t, k, s, \hat{x}_0, \hat{v}_0) = \int \Pr( x_t \mid \check{x}_t ) \Pr( \check{x}_t, k,s,\hat{x}_0, \hat{v}_0) d\check{x}_t
	\end{align*}
	Substitution of \eqref{eq:approximation 1} yields
	\begin{align*}
		\begin{split}
		&\Pr(x_t, k, s, \hat{x}_0, \hat{v}_0) \nonumber \\
		& = \sum_{i,j} W(k,s,i,j,\hat{x}_0) \left. \Pr( x_t \mid \check{x}_t ) \right|_{ \check{x}_t = \Phi_{k,s}^t( x_0^{i,j}) } + \tilde{\varepsilon}_t(x_t)
		\end{split}
	\end{align*}
	where the error term is
	\begin{align}
		\tilde{\varepsilon}_t(x_t) = \int \Pr( x_t \mid \check{x}_t) \varepsilon_t( \check{x}_t) d \check{x}_t
	\end{align}
	and $\varepsilon_t$ is the error term of \eqref{eq:approximation 1}.
	We see that the $L^1$-norm of $\tilde{\varepsilon}_t$ is 
	\begin{align}
		\| \tilde{\varepsilon}_t \|_{L^1} &= \int \left| \int \Pr( x_t \mid \check{x}_t) \varepsilon_t( \check{x}_t) d \check{x}_t \right| dx_t \\
			&\leq \int \Pr( x_t \mid \check{x}_t ) | \varepsilon_t |( \check{x}_t) d\check{x}_t \, dx_t
	\end{align}
	Implementing the integration over $x_t$ first yields:
	\begin{align}
		\| \tilde{\varepsilon}_t \|_{L^1} \leq \int | \varepsilon_t |( \check{x}_t) d\check{x}_t =: \| \varepsilon_t \|_{L^1}
	\end{align}
	which is $\mathcal{O}( | \Delta x | + \varepsilon_{tol} )$ by Theorem \ref{thm:error}.
\end{proof}

Corollary \ref{cor:error} justifies using \eqref{eq:approximation 2} as an approximation of $\Pr( x_t, k,s,\hat{x}_0, \hat{v}_0)$.
This reduces the problem of computing $\rho_t(x_t)$ to the problem of computing the weights $W(k,s,i,j,\hat{x}_0)$ and the points $\Phi_{k,s}^t(x_0^{i,j})$ for all $k,s$ and points $x_0^{i,j} \in \Gamma_L( \hat{x}_0; N_x)$.
We can reduce this burden further by exploiting the following symmetry:
\begin{thm} \label{thm:symmetry}
	$\Phi_{k,s}^t = \Phi_{k,1}^{st}$.
\end{thm}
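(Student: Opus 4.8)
The plan is to prove the claim pointwise: I will show that for every initial condition $x_0 \in D$ the two curves $t \mapsto \Phi_{k,s}^t(x_0)$ and $t \mapsto \Phi_{k,1}^{st}(x_0)$ coincide, from which equality of the maps follows. The underlying fact is the standard time-rescaling invariance of autonomous flows: multiplying an autonomous vector field by a constant $s$ merely reparametrizes its integral curves by that same constant.

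Concretely, first I would define $z(\tau) := \Phi_{k,1}^\tau(x_0)$, so that by the definition of the flow-map $z$ solves the initial value problem $z'(\tau) = X_k(z(\tau))$ with $z(0) = x_0$. Next I would introduce the reparametrized curve $w(t) := z(st)$ and differentiate it using the chain rule:
\begin{align*}
	\frac{dw}{dt}(t) = s\, z'(st) = s\, X_k(z(st)) = s\, X_k(w(t)),
\end{align*}
together with $w(0) = z(0) = x_0$. Thus $w$ is a solution of the initial value problem $\dot{w} = s\, X_k(w)$, $w(0) = x_0$, which is exactly the defining equation \eqref{eq:ode} for the flow-map $\Phi_{k,s}^t$.

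Finally, since $t \mapsto \Phi_{k,s}^t(x_0)$ is by definition \emph{the} solution of that same initial value problem, invoking uniqueness of solutions yields $\Phi_{k,s}^t(x_0) = w(t) = z(st) = \Phi_{k,1}^{st}(x_0)$. As $x_0 \in D$ was arbitrary, the map identity $\Phi_{k,s}^t = \Phi_{k,1}^{st}$ follows.

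The argument is essentially elementary, so there is no deep obstacle; the only point requiring care is the appeal to uniqueness of solutions. This is where one must ensure the vector fields $X_k$ satisfy a regularity hypothesis — Lipschitz continuity on $D$ suffices via the Picard--Lindel\"of theorem — guaranteeing that the initial value problem for $s\, X_k$ admits a unique solution. It is worth noting that the unit-speed normalization $\|X_k\| = 1$ plays no role in this particular argument; the reparametrization identity holds for any autonomous vector field whose flow is well defined.
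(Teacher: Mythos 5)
Your proof is correct and takes essentially the same approach as the paper's: a time-reparametrization of an integral curve via the chain rule, followed by an appeal to uniqueness of solutions to identify the reparametrized curve with the flow. The only differences are cosmetic --- you rescale the unit-speed flow by $s$ rather than rescaling the $s$-speed flow by $1/s$ (which incidentally avoids the paper's implicit division by $s$, so your argument also covers $s=0$), and you state explicitly the Lipschitz/uniqueness hypothesis that the paper leaves implicit.
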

\begin{proof}
	Say $x(t)$ satisfies the ordinary differential equation $x'(t) = sX_k(x(t))$ with the initial condition $x_0$.
	In other words, $x(t) = \Phi_{k,s}^{t}(x_0)$.
	Taking a derivative of $x(t/s)$, we see $\frac{d}{dt} (x(t/s)) = x'(t/s) /s = X_k(x(st))$.
	Therefore $x(t/s) = \Phi_{k,1}^{t}( x_0)$.
	Substitution of $t$ with $\tau = t/s$ yields $x(\tau) = \Phi_{k,1}^{s \tau} (x_0)$.
	As $x(\tau) = \Phi_{k,s}^{\tau}(x_0)$ as well, the result follows.
\end{proof}
This, allows us to compute $\Phi_{k,s}^t( x_0^{\alpha})$ using computations of $\Phi_{k,1}^t(x_0^{\alpha})$, which yields the following result:
\begin{thm} \label{thm:main}
	Let $\{s_1,\dots,s_n\}$ be a regular partition on the support of $\Pr(s)$.
	Then the density
	\begin{align}
		\begin{split}
		&\Delta s \sum_{i,j,k,m} W(k,s_m,i,j,\hat{x}_0) \left. \Pr( x_t \mid \check{x}_t ) \right|_{ \check{x}_t = \Phi_{k,1}^{s_m t}( x_0^{i,j}) } \\
		&+\Pr( x_t , lin, \hat{x}_0, \hat{v}_0 )
		\end{split}
		\label{eq:approximation 3}
	\end{align}
	approximates $\Pr( x_t, \hat{x}_0, \hat{v}_0)$ with an error of size $\mathcal{O}( \Delta s + \Delta x + \varepsilon_{\rm tol})$.
\end{thm}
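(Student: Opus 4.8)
The plan is to chain together the three approximation results already established --- the Riemann-sum bound accompanying \eqref{eq:approximation 0}, Corollary \ref{cor:error}, and the rescaling identity of Theorem \ref{thm:symmetry} --- and to track the accumulation of $L^1$ error through the triangle inequality. I begin from the exact decomposition of the joint density, $\Pr(x_t, \hat{x}_0, \hat{v}_0) = \sum_k \int \Pr(x_t, k, s, \hat{x}_0, \hat{v}_0)\,ds + \Pr(x_t, lin, \hat{x}_0, \hat{v}_0)$, which is \eqref{eq:decomposition} written for the joint rather than the conditional. The linear term is carried along unchanged, since it appears verbatim in \eqref{eq:approximation 3} and contributes no error.

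First I would replace the integral over $s$ in the nonlinear term by its Riemann sum over the partition $\{s_m\}$ via \eqref{eq:approximation 0}. This introduces a single error $\varepsilon_s$ with $\|\varepsilon_s\|_{L^1} = \mathcal{O}(\Delta s)$ by the accompanying total-variation bound, and leaves the nonlinear term as $\Delta s \sum_m \sum_k \Pr(x_t, k, s_m, \hat{x}_0, \hat{v}_0)$. Next, I would apply Corollary \ref{cor:error} to each summand $\Pr(x_t, k, s_m, \hat{x}_0, \hat{v}_0)$, replacing it by $\sum_{i,j} W(k, s_m, i, j, \hat{x}_0)\,\Pr(x_t \mid \check{x}_t)\big|_{\check{x}_t = \Phi_{k,s_m}^t(x_0^{i,j})}$ plus an error $\tilde{\varepsilon}_t^{(k,m)}$ of magnitude $\mathcal{O}(|\Delta x| + \varepsilon_{tol})$. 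Finally, Theorem \ref{thm:symmetry} lets me rewrite each flow $\Phi_{k,s_m}^t$ as $\Phi_{k,1}^{s_m t}$, producing exactly the summand appearing in \eqref{eq:approximation 3}.

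The total error is then $\varepsilon_s + \Delta s \sum_m \sum_k \tilde{\varepsilon}_t^{(k,m)}$, and the claimed $L^1$ bound follows from the triangle inequality, provided the accumulation is handled carefully. The partition has roughly $2\bar{s}/\Delta s \sim 1/\Delta s$ points $s_m$ and there are $n$ values of $k$, so $\Delta s \sum_m \sum_k \|\tilde{\varepsilon}_t^{(k,m)}\|_{L^1}$ is a sum of $\sim n/\Delta s$ terms each of size $\mathcal{O}(|\Delta x| + \varepsilon_{tol})$, scaled by $\Delta s$; the factor $\Delta s$ cancels the number of partition points, leaving $\mathcal{O}(|\Delta x| + \varepsilon_{tol})$. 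Combining this with the $\mathcal{O}(\Delta s)$ Riemann error yields the advertised $\mathcal{O}(\Delta s + \Delta x + \varepsilon_{tol})$.

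The main obstacle I anticipate is \emph{uniformity of the implied constants} in this cancellation. For $\Delta s \sum_m \|\tilde{\varepsilon}_t^{(k,m)}\|_{L^1}$ to collapse back to $\mathcal{O}(|\Delta x| + \varepsilon_{tol})$, the constant supplied by Corollary \ref{cor:error} --- which traces through Theorem \ref{thm:error} to the total variation and regularity of $\Pr(\hat{x}_0, x_0, k, s)\Psi(\hat{v}_0; k, s, x_0)$ in $x_0$ --- must be bounded uniformly over all $s_m$ in the partition and all $k \in \{1,\dots,n\}$, not merely finite for each fixed pair. Since $s$ ranges over the compact interval $[-\bar{s}, \bar{s}]$ and $k$ over a finite set, this uniform bound should follow from a continuity-and-compactness argument on the parameter dependence of these densities; making that dependence explicit is the one place where the argument needs genuine care rather than bookkeeping.
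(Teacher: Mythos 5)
Your proposal is correct and follows essentially the same route as the paper's own (much terser) proof: chaining the Riemann-sum bound of \eqref{eq:approximation 0}, Corollary \ref{cor:error}, and the rescaling identity of Theorem \ref{thm:symmetry}, differing only in the order in which the Riemann sum and the grid approximation are applied. Your explicit treatment of the error accumulation over the partition points and the uniformity of the implied constants in $(k,s_m)$ is a genuine refinement that the paper's three-sentence proof silently glosses over, but it does not constitute a different argument.
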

\begin{proof}
	Substitute Theorem \ref{thm:symmetry} into Corollary \ref{cor:error}, to replace $\Phi_{k,s}^t$ with $\Phi_{k,1}^{st}$.
	This gives us an error term of size $\Delta x$, if we compute the integral over $s$ exactly.
	Using \eqref{eq:approximation 0}, we can compute the integral over $s$ approximately, with an error of magnitude $\Delta s$.
\end{proof}

This is a powerful result since it allows us to compute $\Pr( x_t, \hat{x}_0, \hat{v}_0)$  (and thus $\rho_t(x_t)$) at times $t \in \{\Delta t, \dots, N_t \Delta t\}$ with a single computation of $\Phi_{k,1}^{\ell \Delta t \bar{s}}( x_0^{i,j})$ for each $\ell \in  \{-N_t,\dots,N_t\}$, $k \in \{1,\dots,n\}$, and $x_0^{i,j} \in \Gamma_L( \hat{x}_0 , N_x)$.
To appreciate this, assume we are given $\Phi_{k,1}^{m  \bar{s} \Delta t}(x_0^{i,j})$ for all $m \in \{ -\ell, \ldots, \ell \}$.
We can then use the regular partition
\begin{align}
	\mathcal{P}_\ell := \left\{ \frac{m}{\ell} \bar{s} \right\}_{m=-\ell}^{\ell} \label{eq:partition}
\end{align}
of $[-\bar{s}, \bar{s}]$ to compute a Riemann sum approximation using Theorem \ref{thm:main}.
The partition $\mathcal{P}_\ell$ has a width of size, $\Delta s = \bar{s} / \ell$, and substitution in \eqref{eq:approximation 3} yields an approximation of $\Pr( x_t, \hat{x}_0, \hat{v}_0)$ at time $t=\ell \Delta t$
given by
\begin{align}
		&\frac{\bar{s}}{\ell} \sum_{i,j,k,m} W(k,m \bar{s} / \ell ,i,j,\hat{x}_0) \left. \Pr( x_t \mid \check{x}_t ) \right|_{ \check{x}_t = \Phi_{k,1}^{m \bar{s} \Delta t}( x_0^{i,j}) }  \nonumber \\
		&\qquad+\Pr( x_t , lin, \hat{x}_0, \hat{v}_0 ).
\end{align}
As we have already computed $\Phi_{k,1}^{m \bar{s} \Delta t}$ for $| m | \leq \ell$ by assumption, the only obstacle to computing this sum is the computation of the weights $W(k,m \bar{s} / \ell ,i,j,\hat{x}_0)$.
If we want to compute $\Pr( x_t, \hat{x}_0, \hat{v}_0)$ at $t=(\ell + 1 ) \Delta t$, then by reusing the earlier computation one only need to compute $\Phi_{k,1}^{m  \bar{s} \Delta t}(x_0^{i,j})$ for $m \in \{ -( \ell + 1 ), (\ell + 1 )\}$ along with the weights to obtain an approximation using the partition $\mathcal{P}_{\ell + 1}$.


The procedure to compute  $\rho_t(x_t)$ is summarized in Algorithm \ref{alg:1}.
For fixed $k$, $i$, and $j$, the computation of $\Phi_{k,1}^t(x_0^{i,j})$ at each $t = \{-N_t \Delta t \bar{s} ,\dots, N_t \Delta t \bar{s}\}$ takes $\mathcal{O}(N_t)$ time using an explicit finite difference scheme and can be done in parallel for each $k \in \{1,\dots, n\}$ and $x_0^{i,j} \in \Gamma_L( \hat{x}_0; N_x)$.
Similarly, computing $W(k,s,i,j,\hat{x}_0)$ constitutes a series of parallel function evaluations over tuples $(k,s,i,j)$ of the posterior distributions described in \eqref{eq:pgm}, where the continuous variable $s$ is only required at finitely many places in Algorithm \ref{alg:1}.
If the posteriors represented by the arrows in \eqref{eq:pgm} are efficiently computably then the computation of $W(k,s,i,j, \hat{x}_0)$ is equally efficient.

\begin{algorithm}[t]
		  \caption{Algorithm to compute  $\rho_t$ for each $t \in  \{\Delta t, 2 \Delta t, \dots, N_t \Delta t$\}.}
		  \label{alg:1}
		  \begin{algorithmic}[1]
		    \Require $\bar{s} > 0$, $N_t \in \mathbb{N}$, $\Delta t > 0$,  $ \{ X_k \}_{k=1}^{n}$, $\hat{x}_0$, $\hat{v}_0$, $\Gamma_L( \hat{x}_0, N_x)$, $\Pr(x_0 \mid M), \Pr(x_0 \mid \hat{x}_0), \Pr(v_0 \mid \hat{v}_0 ), \Pr(M \in \mathcal{M}), \text{ and } \Pr(s \mid k)$.
		    \For {$\ell \in \{1,\dots,N_t\}$}
		    \State Compute $\Phi_{k,1}^{ \pm \bar{s} \ell \Delta t}( x_0^{i,j} )$ for all $x_0^{i,j} \in \Gamma_L( \hat{x}_0 ; N_x)$ and \hspace*{12pt} for all $k \in  \{1,\dots, n\}$.
		    \For {$m \in \{-\ell,\dots,\ell\}$}
		    \State Compute $W(k,s,i,j,\hat{x}_0)$ for $s = \bar{s} m / \ell$ for all \hspace*{28pt} $i,j\in \{-N_x,\dots, N_x\}$ and $k\in \{1,\dots,n\}$.
		    \EndFor  
		    \State Compute $\Pr(\check{x}_t, \hat{x}_0, \hat{v}_0)$ at $t=\ell \Delta t$ via Theorem \ref{thm:main} 
		    \hspace*{12pt} with partition $\mathcal{P}_\ell$ of \eqref{eq:partition}.\label{alg1:step2}
		    \State Apply Bayes' Theorem to obtain $\Pr(\check{x}_t \mid \hat{x}_0, \hat{v}_0)$.
		    \State Compute $\rho_t(x_t)$ at $t= \ell \cdot \Delta t$ via \eqref{eq:convolve}.
		    \EndFor
		  \end{algorithmic}
\end{algorithm}

\section{Implementation and Experimental results} \label{sec:implementation}
  Given the model established in the previous section, we describe an implementation to showcase one way the model can be applied to observational data.
  In particular, we must learn the vector fields $\{X_1, \dots, X_n\}$, the posteriors $\Pr( x_0 \mid M)$ and the priors $\Pr(M)$ for $M \in \mathcal{M}$ from the data.
  For the purposes of demonstration, we use the Stanford Drone Dataset \cite{Robicquet2016}.
  More generally, we assume that for a fixed scene we have a database of previously observed trajectories $\{ \hat{x}^1, \dots, \hat{x}^N\}$.
  From this data we tune the parameters of the model ($\{X_1, \dots, X_n\}$, $\Pr( x_0 \mid M)$ and $\Pr(M)$)  appropriately.
  
  \subsection{Learning the Vector Fields}
 We begin by identifying the number of possible vector-fields.
To do, this we use a clustering algorithm on the trajectories using Affinity Propagation \cite{FreyDueck2007} and a custom distance measure defined on the endpoints of trajectories. 
	Let one trajectory $A$ start at $(x_{A,\mathrm{start}}, y_{A,\mathrm{start}})$ and end at  $(x_{A,\mathrm{end}}, y_{A,\mathrm{end}})$, and another trajectory $B$ start at $(x_{B,\mathrm{start}}, y_{B,\mathrm{start}})$ and end at $(x_{B,\mathrm{end}}, y_{B,\mathrm{end}})$ . 
	We define the points $\mathbf{a}_1 = (x_{A, \mathrm{start}}, y_{A, \mathrm{start}}, x_{A, \mathrm{end}}, y_{A, \mathrm{end}})$, $\mathbf{a}_2 = (x_{A, \mathrm{end}}, y_{A, \mathrm{end}}, x_{A, \mathrm{start}}, y_{A, \mathrm{start}})$, and $\mathbf{b} = (x_{B, \mathrm{start}}, y_{B, \mathrm{start}}, x_{B, \mathrm{end}}, y_{B, \mathrm{end}})$. 
	We then define our distance measure as $d(A, B) :=  \min \left\{ d_e(\mathbf{a}_1, \mathbf{b}), d_e(\mathbf{a}_2, \mathbf{b}) \right\}$, for the euclidean distance $d_e$ in $\mathbb{R}^4$. 
	This function measures the distance between the endpoints irrespective of their ordering, which means that a trajectory that starts from point A and ends at point B will be close to a trajectory that starts from point B and ends at point A.
	The scale of the datasets we tested on had large enough spatial scale that clustering based on endpoints captured people moving from destination to destination, e.g. from a storefront to the sidewalk at the edge of a scene. 
	 On this dataset, distance functions that utilize the entire trajectory did not identify pedestrian intent as well as our method \cite{Morris2009,Lee2007}. It appears the metrics proposed in \cite{Morris2009} put trajectories that were similar for periods of time together even though they had different intents.
	This clustering of the end-points induces a clustering of the trajectories.
  Suppose we obtain clusters $S_1, \dots, S_n$ consisting of trajectories from our data set, as well as a set of unclassified trajectories, $S_0$.
  
  For each set $S_k$ we learn a vector-field that is approximately compatible with that set.
  Since most trajectories appearing in the dataset have roughly constant speed, we chose a vector-field that has unit magnitude. 
  That is, we assume the vector-field takes the form $X_k(x) = \left( \cos( \Theta_k(x) ) , \sin(\Theta_k(x)) \right)$ for some scalar function $\Theta_k(x)$.
  Learning the vector-fields then boils down to learning the scalar function $\Theta_k$.
  We assume $\Theta_k = \sum_{\alpha} \theta_{k,\alpha} L_{\alpha}(x)$ for some collection of coefficients, $\theta_{k,\alpha}$, and a fixed collection of basis functions, $L_{\alpha}$.
  We choose $L_{\alpha}$ to be a set of low degree Legendre polynomials.
 
 $\Theta_k$ is learned by computing the velocities observed in the cluster, $S_k$.
  These velocities are obtained by a low order finite difference formula.
  Upon normalizing the velocities, we obtain a unit-length velocity vectors, $v_{i,k}$, anchored at each point, $x_{i,k}$, of $S_k$.
  We learn $\Theta_k$ by defining the cost-function:
  \begin{align}
  	C[ \Theta_k] = \sum_i \langle v_{i,k} , ( \cos(\Theta_k( x_{i,k}) , \sin( \Theta_k( x_{i,k} ) ) \rangle
  \end{align}
  which penalizes $\Theta_k$ for producing a misalignment with the observed velocities at the observed points of $S_k$.
  When $\Theta_{k}$ includes high order polynomials (e.g. beyond 5th order), we also include a regularization term to bias the cost towards smoother outputs.
  Using the $H^1$-norm times a fixed scalar suffices as a regularization term. 
  
  \subsection{Learning $\Pr( x_0 \mid M)$ and $\Pr(M)$}
  
We first considering the nonlinear models.
  We begin by assuming that $x_0$ is independent of $s$ given $k$, i.e. $\Pr( x_0 \mid k,s) = \Pr(x_0 \mid k)$.
  Additionally, we assume that $s$ and $k$ are independent.
  This means that we only need to learn $\Pr( x_0 \mid k)$, $\Pr(k)$, and $\Pr(s)$.
  We let $\Pr(k) = (n+1)^{-1}$ and $\Pr(s) \sim \mathcal{U}( [-s_{\max}, s_{\max} ] )$  where $s_{\max}>0$ is the largest observed speed in the dataset.  
  This implies that $\Pr(lin) = (n+1)^{-1}$ as well.
  
  For each $k$ we assume $\Pr( x_0 \mid k) = \frac{1}{Z_k} \exp( - V_k(x_0) )$ and $V_k$ is a function whose constant term is $0$ and is  $V_k(x_0; \mathbf{c} ) := \sum_{|\alpha|< d} c_{\alpha} L_{\alpha}( x_0)$ for a collection of basis functions, $L_{\alpha}$ and coefficients $\mathbf{c} = \{ c_{\alpha} \}_{|\alpha| < d}$.
  We chose our basis functions to be the collection of tensor products of the first six Legendre polynomials, normalized to the size of the domain.
  Then, one may fit the coefficients $c_{\alpha}$ to the data by using a log-likelihood criterion.
  The resulting (convex) optimization problem takes the form:
  \begin{align}
  	\mathbf{c}^* = \inf_{ \mathbf{c} } \sum_{x \in S_k} V_k( x_0; \mathbf{c})
  \end{align}
  Where the norm on $\mathbf{c}$ is a sup-norm.
  We bias this optimization towards smooth functions by adding a penalty to the cost function.
    Finally, we let $\Pr( x_0 \mid lin) \sim \mathcal{U}(D)$.
  
  \subsection{Learning the Measurement Model}
  We assume a Gaussian noise model (i.e. $\Pr( \hat{x}_0 \mid x_0 ) \sim \mathcal{N}( x_0 , \sigma_x)$ and $\Pr( \hat{v}_0 \mid v_0 ) \sim \mathcal{N}( v_0, \sigma_v)$).
  Therefore, our model is parametrized by the standard deviations $\sigma_x$ and $\sigma_v$.
  We assume that the true trajectory of an agent is smooth compared to the noisy output of our measurement device.
  This justifies smoothing the trajectories, and using the difference between the smoothed signals and the raw data to learn the variance $\sigma_x$.
  To obtain the results in this paper we have used a moving average of four time steps (this is $0.13$ seconds in realtime).
  We set $\sigma_v = 2 \sigma_x / \Delta t$ where $\Delta t > 0$ is the time-step size.  
  This choice is justified from the our use of finite differences to estimate velocity.
  In particular, if velocity is approximated via finite differencing as $v(t) \approx (x(t+h) - x(t))\,\Delta t^{-1} + \mathcal{O}(h)$ and the measurements are corrupted by Gaussian noise, then the measurement $\hat{v}(t)$ is related to $v(t)$ by Gaussian noise with roughly the same standard deviation as $(x(t+h) - x(t))\,\Delta t^{-1}$.

  \begin{figure*}[t!]
  \begin{subfigure}[t]{0.48\textwidth}
	\centering
	\begin{minipage}[c]{0.47cm}
		\rotatebox{90}{\small{Our Algorithm}   }
	\end{minipage}
	\begin{minipage}[c]{0.3\linewidth}
		\includegraphics[width=\linewidth]{./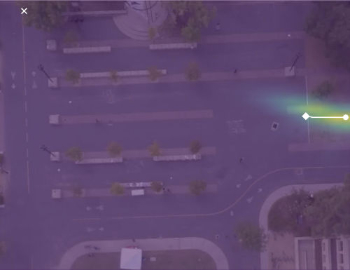}
	\end{minipage}
	\begin{minipage}[c]{0.3\linewidth}
		\includegraphics[width=\linewidth]{./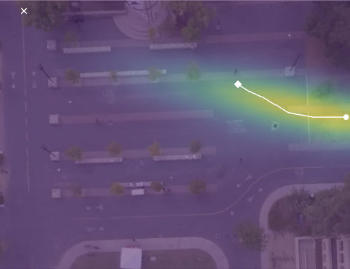}
	\end{minipage}
	\begin{minipage}[c]{0.3\linewidth}
		\includegraphics[width=\linewidth]{./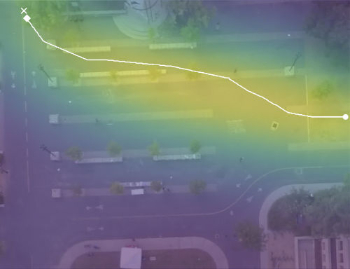}
	\end{minipage}
	
	\vspace{0.1cm}
	\begin{minipage}[c]{0.47cm}
		\rotatebox{90}{\small{Kitani et al.}  }
	\end{minipage}
	\begin{minipage}[c]{0.3\linewidth}
		\includegraphics[width=\linewidth]{./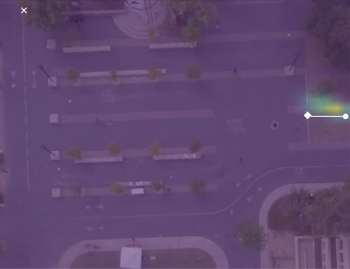}
	\end{minipage}
	\begin{minipage}[c]{0.3\linewidth}
		\includegraphics[width=\linewidth]{./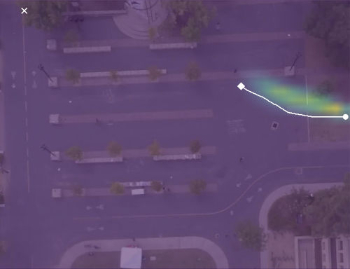}
	\end{minipage}
	\begin{minipage}[c]{0.3\linewidth}
		\includegraphics[width=\linewidth]{./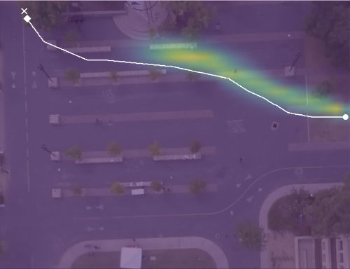}
	\end{minipage}
	
	\vspace{0.1cm}
		\begin{minipage}[c]{0.47cm}
			\rotatebox{90}{\small{S-LSTM} }  
		\end{minipage}
		\begin{minipage}[c]{0.3\linewidth}
			\includegraphics[width=\linewidth]{./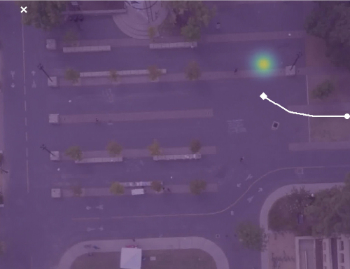}
		\end{minipage}
		\begin{minipage}[c]{0.3\linewidth}
			\includegraphics[width=\linewidth]{./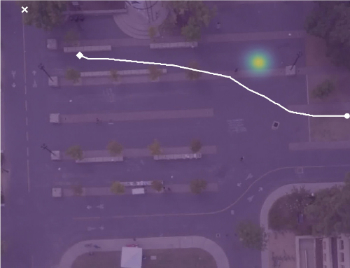}
		\end{minipage}
		\begin{minipage}[c]{0.3\linewidth}
			\includegraphics[width=\linewidth]{./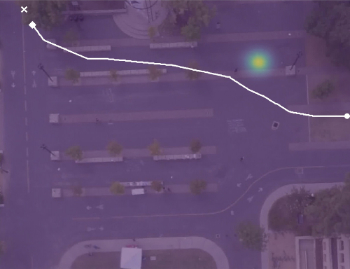}
		\end{minipage}
	
	\vspace{0.1cm}
	\begin{minipage}[c]{0.47cm}
		\rotatebox{90}{\small{Random Walk} }  
	\end{minipage}
	\begin{minipage}[c]{0.3\linewidth}
		\includegraphics[width=\linewidth]{./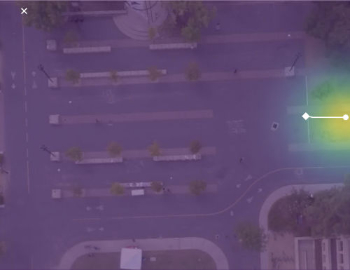}
	\end{minipage}
	\begin{minipage}[c]{0.3\linewidth}
		\includegraphics[width=\linewidth]{./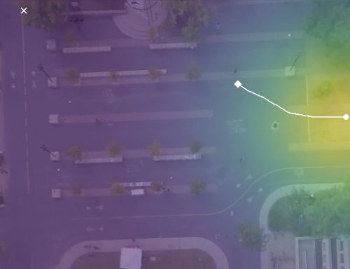}
	\end{minipage}
	\begin{minipage}[c]{0.3\linewidth}
		\includegraphics[width=\linewidth]{./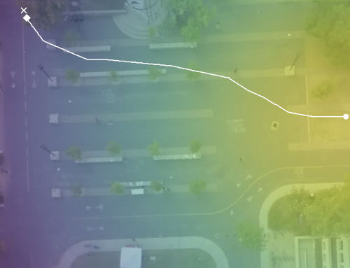}
	\end{minipage}
	
	\vspace{0.2cm}
	\hspace{.5cm}
	\begin{minipage}{0.3\linewidth}
		\centering
		$t = 1.66s$
	\end{minipage}
	\begin{minipage}{0.3\linewidth}
		\centering
		$t = 4.33s$
	\end{minipage}
	\begin{minipage}{0.3\linewidth}
		\centering
		$t = 12.33s$
	\end{minipage}
	\caption{Scene 1}
	\label{fig:bookstore-1-2}
	\end{subfigure} 
	\hspace*{0.5cm}
	\begin{subfigure}[t]{0.48\textwidth}
	\centering
	\begin{minipage}[c]{0.47cm}
		\rotatebox{90}{\small{Our Algorithm}}   
	\end{minipage}
	\begin{minipage}[c]{0.3\linewidth}
		\includegraphics[width=\linewidth]{./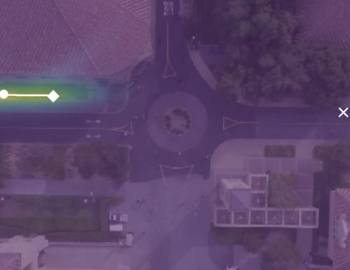}
	\end{minipage}
	\begin{minipage}[c]{0.3\linewidth}
		\includegraphics[width=\linewidth]{./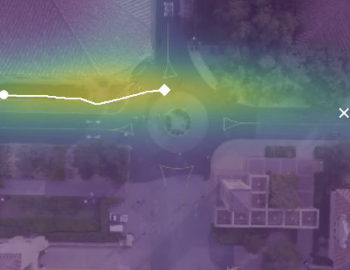}
	\end{minipage}
	\begin{minipage}[c]{0.3\linewidth}
		\includegraphics[width=\linewidth]{./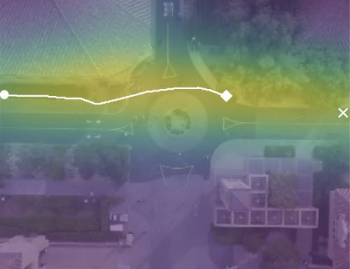}
	\end{minipage}
	
	\vspace{0.1cm}
	\begin{minipage}[c]{0.47cm}
		\rotatebox{90}{\small{Kitani et al.}} 
	\end{minipage}
	\begin{minipage}[c]{0.3\linewidth}
		\includegraphics[width=\linewidth]{./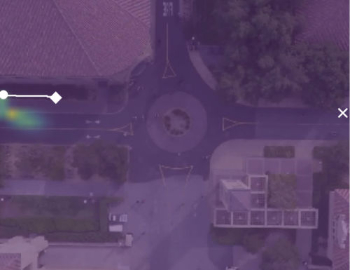}
	\end{minipage}
	\begin{minipage}[c]{0.3\linewidth}
		\includegraphics[width=\linewidth]{./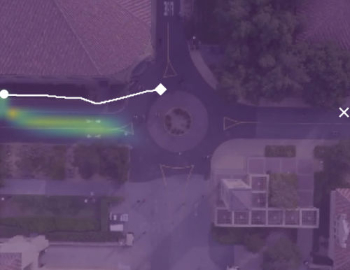}
	\end{minipage}
	\begin{minipage}[c]{0.3\linewidth}
		\includegraphics[width=\linewidth]{./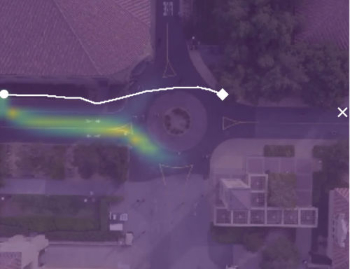}
	\end{minipage}
	
	\vspace{0.1cm}
	\begin{minipage}[c]{0.47cm}
		\rotatebox{90}{\small{S-LSTM}} 
	\end{minipage}
	\begin{minipage}[c]{0.3\linewidth}
		\includegraphics[width=\linewidth]{./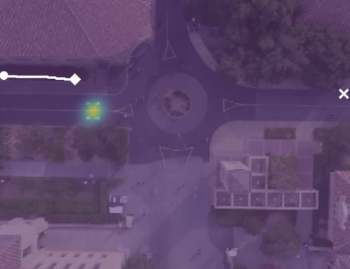}
	\end{minipage}
	\begin{minipage}[c]{0.3\linewidth}
		\includegraphics[width=\linewidth]{./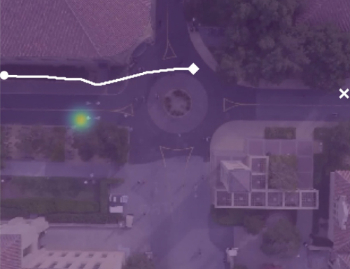}
	\end{minipage}
	\begin{minipage}[c]{0.3\linewidth}
		\includegraphics[width=\linewidth]{./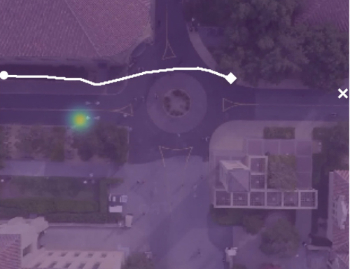}
	\end{minipage}
	
	\vspace{0.1cm}
	\begin{minipage}[c]{0.47cm}
		\rotatebox{90}{\small{Random Walk} }
	\end{minipage}
	\begin{minipage}[c]{0.3\linewidth}
		\includegraphics[width=\linewidth]{./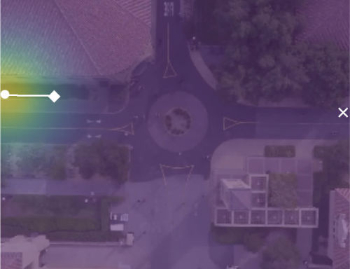}
	\end{minipage}
	\begin{minipage}[c]{0.3\linewidth}
		\includegraphics[width=\linewidth]{./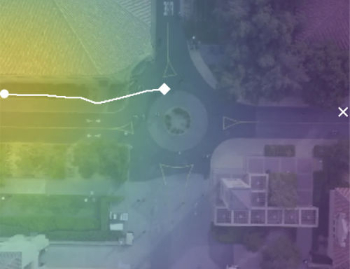}
	\end{minipage}
	\begin{minipage}[c]{0.3\linewidth}
		\includegraphics[width=\linewidth]{./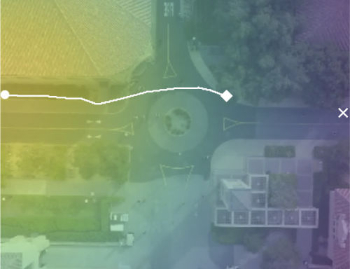}
	\end{minipage}
	
	\vspace{0.2cm}
	\hspace{.5cm}
	\begin{minipage}{0.3\linewidth}
		\centering
		$t = 2.33s$
	\end{minipage}
	\begin{minipage}{0.3\linewidth}
		\centering
		$t = 8.33s$
	\end{minipage}
	\begin{minipage}{0.3\linewidth}
		\centering
		$t = 12.33s$
	\end{minipage}	
	\caption{Scene 2}
	\label{fig:death-1-2}
	\end{subfigure}
	\caption{An illustration of the predictions generated by the various algorithms. In this figure, the dot is the start point of the test trajectory, the diamond is the position at time t, and the X is the end of the trajectory. The likelihood of detection is depicted using the virdis color palette. Notice that the Random Walk is imprecise, while the predictions generated by the algorithm in \cite{Kitani2012} are unable to match the speed of the agent and choose the wrong direction to follow the agent around the circle. The algorithm in \cite{Alahi2016} is confident and close to the trajectory at small times, but their lack of a motion model causes their prediction to compress into a point at intermediate time scales.}
	\vspace{-0.7cm}
\end{figure*}

  \subsection{Learning the Noise Model}
  Finally, we assume that the true position is related to the model by Gaussian noise with a growing variance.
  In particular, we assume $\Pr( x_t \mid \check{x}_t) \sim \mathcal{N}( \check{x}_t , \kappa t)$ for some constant $\kappa \geq 0$.
  The parameter, $\kappa$, must be learned.
  For each curve in $S_k$ we create a synthetic curve using the initial position and speed and integrating the corresponding vector-field, $s\, X_k$.
  So for each curve, $x_i(t)$, of $S_k$, we have a synthesized curve $x_{i,synth}(t)$ as well.
  We then measure the standard deviation of $(x_i(t) - x_{i,synth}(t)) / t$ over $i$ and at few time, $t \in \{ 0, 100, 200 \}$ in order to obtain $\kappa$.
  
 \subsection{Evaluating Performance}
 
This section establishes our methods performance and compares it to the model from \cite{Kitani2012}, \cite{Alahi2016}, and a random walk.
We implement our model as well as our evaluation code in Python 2.6\footnote{https://github.com/ramvasudevan/iros2017\_pedestrian\_forecasting}.
We performed a 2-fold cross validation by using 20\% of the data for testing and the remainder for training within each scene. 
	We learned separate collections of vector fields and model parameters for each fold on all of the four scenes on which we tested.
Our analysis was conducted on the Coupa, Bookstore, Death Circle, and Gates scenes from the dataset from \cite{Robicquet2016}, with a total of 142 trajectories analyzed.


Note that the implementation of the algorithm in \cite{Kitani2012} required the endpoint of each test trajectory. 
Without this information the implementation of the predictor provided by the authors devolved into a random walk.
None of the other tested algorithms required this information.

The output distributions of the four algorithms were compared using their integrals over the cells of a regular grid over our domain. 
These integrals are used to visualize the distributions in Figures \ref{fig:bookstore-1-2} and \ref{fig:death-1-2}.
Because our predictions all share the same scale, we amalgamate all of the prediction and truth values for all of the simulated agents at a given time step and generate ROC curves.
In our analysis, we sought a metric that evaluated the utility of prediction algorithms in the autonomous vehicle context. 
	In this instance it is critical that the generated set of predictions contains the ground-truth observed trajectory while including as few false positive detections as possible.
	ROC curves which plot the true positive rate against the false positive rate evaluate this aforementioned safety criteria for autonomous vehicles exactly. To generate the true positive rate and false positive rate, a probability distribution has to be thresholded to yield a binary map. Each chosen threshold creates a point on the ROC curve.
	The Area Under the Curve (AUC) is a standard measure of the quality of a predictor.  
	The closer that this AUC is to one, the better the prediction algorithm. We treat every value of each bounding box as a threshold.
	Figure \ref{fig:auc_vs_time} shows the analysis of the AUC of each algorithm versus time.
	
In addition, we evaluated the Modified Hausdorff Distance (MHD) \cite{Dubuisson1994} from the ground truth trajectory to a sample from the predictions at each time to provide a geometric measure of how accurate the predictions are. 
	Figure \ref{fig:mhd_vs_time} shows MHD plotted against time.
	Though popular in evaluating the geometric proximity of the ground truth trajectory and a predicted set, it is not the most appropriate way to evaluate the utility of an algorithm in the autonomous vehicle context. 
	Specifically consider the instance of a generated set of predictions which does not contain the ground-truth observed trajectory but is close to the ground-truth trajectory geometrically. 
	If this generated set of predictions was used in the autonomous vehicle context, it would not preclude certain portions of the space where the person was occupying. 
	Notice that the true positive rate would be zero meaning that the generated set of predictions was not useful.
	Whereas the MHD would describe this generated set of predictions as informative.
Our predictor behaves better than any of the other compared methods at moderately large $t$.
This is despite providing the algorithm in \cite{Kitani2012} with the specific final location of each agent.

\begin{table}
	\begin{center}
		\caption{Comparison of runtimes of the various algorithms.}
		\label{tab:time}
		\renewcommand{\arraystretch}{1.5}
		\begin{tabular}{||c | c c  c c ||} 
			\hline
			& Our Algorithm & Random Walk & Kitani et al. & S-LSTM \\ [0.5ex] 
			\hline 
			$\frac{\mathrm{time}}{\mathrm{frame}}$ & 0.00443s & 2.1E-7s & 0.0706s & 0.0134s \\
			\hline
		\end{tabular}
	\end{center}
	\vspace*{-0.6cm}
\end{table}

\begin{figure}
	\centering
	\includegraphics[width=0.9\linewidth]{./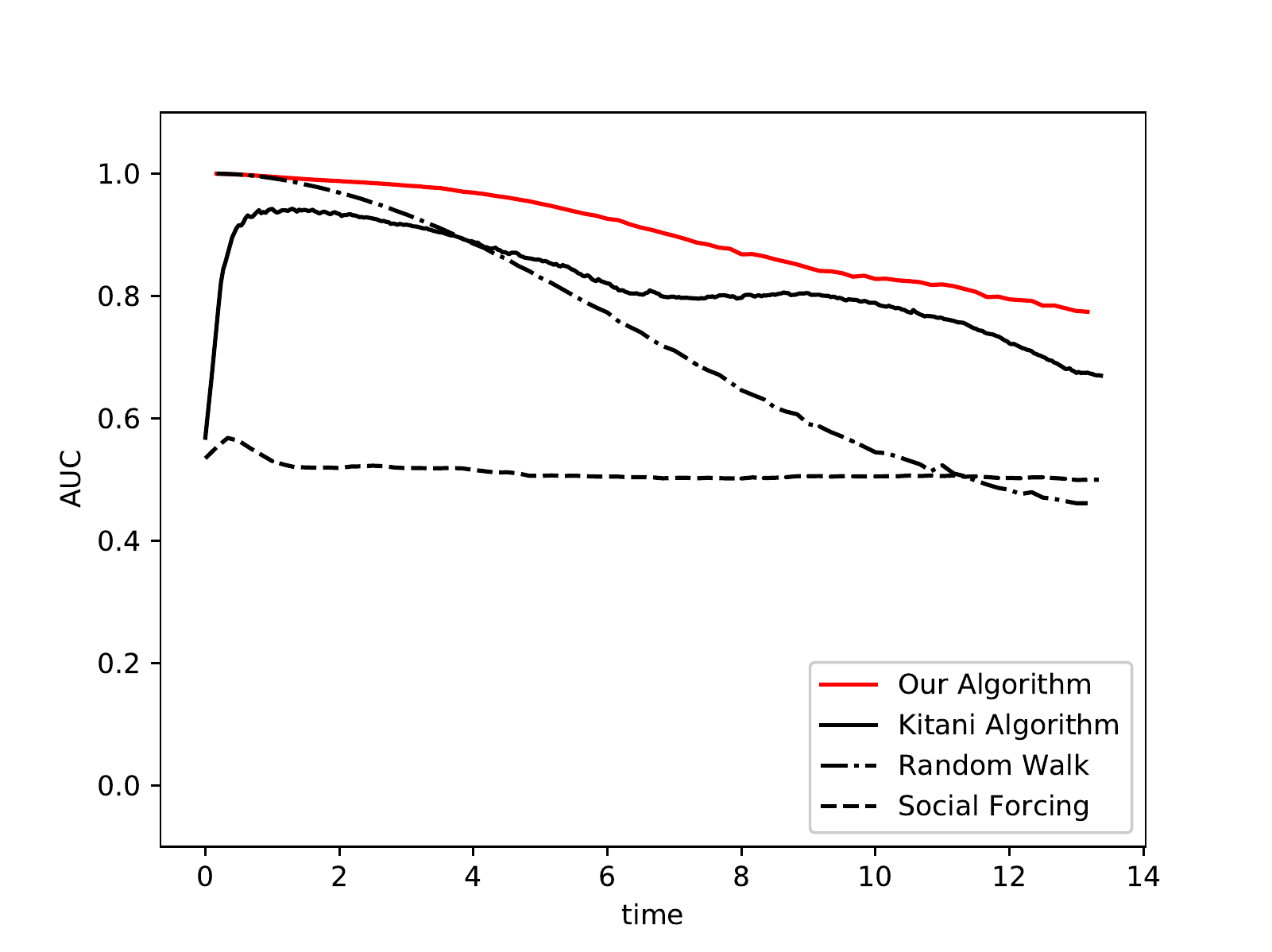}
	\caption{A comparison of the AUC of the various algorithms. Note that the initial dip in the performance of \cite{Kitani2012} is due to their confidence in their initial estimate. We sampled the S-LSTM \cite{Alahi2016} model $100$ times to extract a less concentrated probability distribution from their algorithm.}
	\label{fig:auc_vs_time}
	\vspace*{-0.75cm}
\end{figure}

\begin{figure}
	\centering
	\includegraphics[width=0.9\linewidth]{./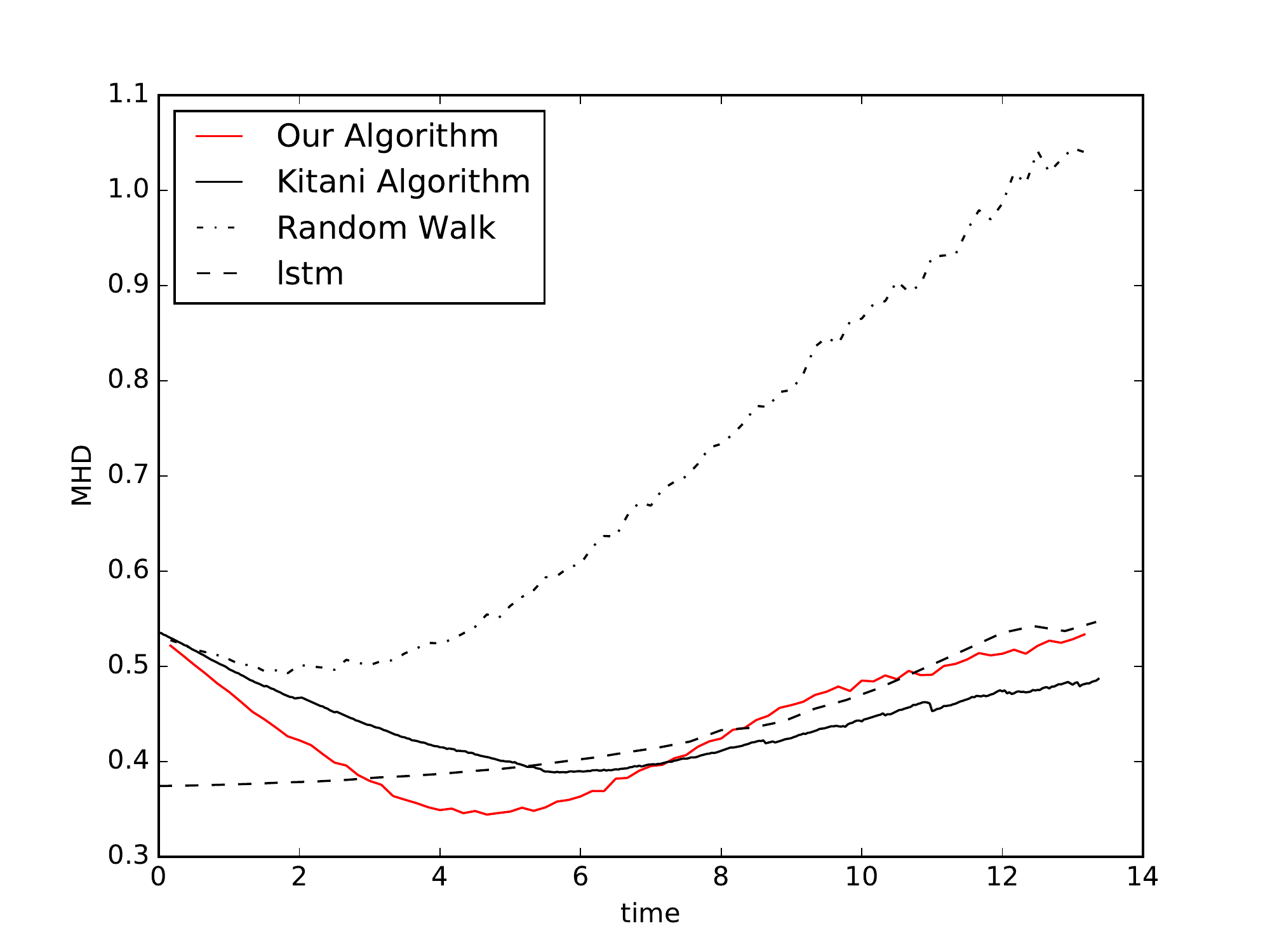}
	\caption{A comparison of the MHD from the ground truth of the pedestrian to a 1000 point samples from each distribution. The method from \cite{Alahi2016} does well at short time scales since it places a highly confident distribution at the given initial position of the pedestrian, but the method developed in this paper outperforms all others at intermediate times. \cite{Kitani2012}, which requires the end point of each trajectory, outperforms all other algorithms at longer time scales since they assume that the end point of the trajectory is known.}
	\label{fig:mhd_vs_time}
	\vspace*{-0.7cm}
\end{figure}

The run time per frame for each algorithm was generated using the mean run time for 400 frames, averaged across all of the trajectory data used in the quality analysis.
This is shown in Table~\ref{tab:time}. 
Our algorithm implementation leveraged its parallelism, which split the computation of frames among 18 cores. 
The algorithm in \cite{Kitani2012} was timed with minimal modification using the source code provided by the authors. 


\section{Conclusion} 
\label{sec:conclusion}
This paper presents a real-time approach to probabilistic pedestrian modeling. 
We demonstrate the ability to accurately predict the final location of a pedestrian with superior accuracy compared to a state-of-the-art with additional contributions in two critical areas: 1) prediction speed is efficient to enable real-time control and 2) our error bound on the predictions accurately reflects potential paths for the human. 
These two features are critical in robotic applications. %
and we see the integration of such techniques with autonomous system control as one pathway to enable safe operation in human-robot spaces. 

Social forces can be incorporated into this model by
	including a social force $F_{i}$ acting on the $i$th agent by letting $\ddot{x}_i = s^2 DX(x) \cdot X(x) + F_i$.  
	Usually $F_{i} = \sum_{j} \nabla U( x_j - x_i)$ where $U$ is an interaction potential \cite{Helbing1995}. 
	The algorithms for generating real-time predictions would then generalize to this definition.
We also plan to explore transfer learning using scene segmentation, as well as the semantic context descriptors and routing scores to show how vector fields can be transferred to novel scenes. It appears that the low-order parameterization and unit-length vector field of our model, make it amenable to the methods developed in \cite{Ballan2016}.

\bibliographystyle{IEEEtran}
\bibliography{references}

\begin{thebibliography}{10}
\providecommand{\url}[1]{#1}
\csname url@rmstyle\endcsname
\providecommand{\newblock}{\relax}
\providecommand{\bibinfo}[2]{#2}
\providecommand\BIBentrySTDinterwordspacing{\spaceskip=0pt\relax}
\providecommand\BIBentryALTinterwordstretchfactor{4}
\providecommand\BIBentryALTinterwordspacing{\spaceskip=\fontdimen2\font plus
\BIBentryALTinterwordstretchfactor\fontdimen3\font minus
  \fontdimen4\font\relax}
\providecommand\BIBforeignlanguage[2]{{%
\expandafter\ifx\csname l@#1\endcsname\relax
\typeout{** WARNING: IEEEtran.bst: No hyphenation pattern has been}%
\typeout{** loaded for the language `#1'. Using the pattern for}%
\typeout{** the default language instead.}%
\else
\language=\csname l@#1\endcsname
\fi
#2}}

\bibitem{Helbing1992}
D.~Helbing, ``A fluid-dynamic model for the movement of pedestrians,''
  \emph{Complex Systems}, vol.~6, pp. 391--415, 1992.

\bibitem{Ziebart2008}
B.~D. Ziebart, A.~Maas, J.~A. Bagnell, and A.~K. Dey, ``Maximum entropy inverse
  reinforcement learning,'' in \emph{Proc. AAAI}, 2008, pp. 1433--1438.

\bibitem{Ziebart2009}
B.~D. Ziebart, N.~Ratliff, G.~Gallagher, C.~Mertz, K.~Peterson, J.~A. Bagnell,
  M.~Hebert, A.~K. Dey, and S.~Srinivasa, ``Planning-based prediction for
  pedestrians,'' in \emph{IROS}, 2009.

\bibitem{Kitani2012}
K.~M. Kitani, B.~D. Ziebart, J.~A. Bagnell, and M.~Hebert, \emph{Activity
  Forecasting}.\hskip 1em plus 0.5em minus 0.4em\relax Berlin, Heidelberg:
  Springer Berlin Heidelberg, 2012, pp. 201--214.

\bibitem{Xie2013}
D.~Xie, S.~Todorovic, and S.~Zhu, ``Inferring ``dark energy'' and ``dark
  matter'' from image and video,'' in \emph{Proc. Int'l Conference on Computer
  Vision}, 2013.

\bibitem{Karasev2016}
V.~Karasev, A.~Ayvaci, B.~Heisele, and S.~Soatto, ``Intent-aware long-term
  prediction of pedestrian motion,'' \emph{Proceedings of the International
  Conference on Robotics and Automation (ICRA)}, May 2016.

\bibitem{Ballan2016}
\BIBentryALTinterwordspacing
L.~Ballan, F.~Castaldo, A.~Alahi, F.~Palmieri, and S.~Savarese, ``Knowledge
  transfer for scene-specific motion prediction,'' in \emph{Proc. of European
  Conference on Computer Vision (ECCV)}, Amsterdam, Netherlands, October 2016.
  [Online]. Available: \url{http://arxiv.org/abs/1603.06987}
\BIBentrySTDinterwordspacing

\bibitem{Walker2014}
J.~Walker, A.~Gupta, and M.~Hebert, ``Patch to the future: Unsupervised visual
  prediction,'' in \emph{Computer Vision and Pattern Recognition}, 2014.

\bibitem{Helbing1995}
D.~Helbing and P.~Molnar, ``Social force model for pedestrian dynamics,''
  \emph{Physical review E}, vol.~51, no.~5, p. 4282, 1995.

\bibitem{Xu2012}
Y.~Xu and H.-J. Huang, ``Simulation of exit choosing in pedestrian evacuation
  with consideration of the direction visual field,'' \emph{Physica A:
  Statistical Mechanics and its Applications}, vol. 391, no.~4, pp. 991--1000,
  2012.

\bibitem{Pellegrini2009}
S.~Pellegrini, A.~Ess, K.~Schindler, and L.~Van~Gool, ``You'll never walk
  alone: Modeling social behavior for multi-target tracking,'' in
  \emph{Computer Vision, 2009 IEEE 12th International Conference on}.\hskip 1em
  plus 0.5em minus 0.4em\relax IEEE, 2009, pp. 261--268.

\bibitem{Yamaguchi2011}
K.~Yamaguchi, A.~C. Berg, L.~E. Ortiz, and T.~L. Berg, ``Who are you with and
  where are you going?'' in \emph{Computer Vision and Pattern Recognition
  (CVPR), 2011 IEEE Conference on}.\hskip 1em plus 0.5em minus 0.4em\relax
  IEEE, 2011, pp. 1345--1352.

\bibitem{Yi2016}
S.~Yi, H.~Li, and X.~Wang, ``Pedestrian behavior modeling from stationary
  crowds with applications to intelligent surveillance,'' \emph{IEEE
  transactions on image processing}, vol.~25, no.~9, pp. 4354--4368, 2016.

\bibitem{Hospedales2009}
T.~Hospedales, S.~Gong, and T.~Xiang, ``A markov clustering topic model for
  mining behaviour in video,'' in \emph{Computer Vision, 2009 IEEE 12th
  International Conference on}.\hskip 1em plus 0.5em minus 0.4em\relax IEEE,
  2009, pp. 1165--1172.

\bibitem{Wang2009}
X.~Wang, X.~Ma, and W.~E.~L. Grimson, ``Unsupervised activity perception in
  crowded and complicated scenes using hierarchical bayesian models,''
  \emph{IEEE Transactions on pattern analysis and machine intelligence},
  vol.~31, no.~3, pp. 539--555, 2009.

\bibitem{Emonet2011}
R.~Emonet, J.~Varadarajan, and J.-M. Odobez, ``Extracting and locating temporal
  motifs in video scenes using a hierarchical non parametric bayesian model,''
  in \emph{Computer Vision and Pattern Recognition (CVPR), 2011 IEEE Conference
  on}.\hskip 1em plus 0.5em minus 0.4em\relax IEEE, 2011, pp. 3233--3240.

\bibitem{Koppula2016}
H.~S. Koppula and A.~Saxena, ``Anticipating human activities using object
  affordances for reactive robotic response,'' \emph{IEEE transactions on
  pattern analysis and machine intelligence}, vol.~38, no.~1, pp. 14--29, 2016.

\bibitem{Tay2008}
M.~Tay and C.~Laugier, ``Modelling smooth paths using gaussian processes,'' in
  \emph{Field and Service Robotics}.\hskip 1em plus 0.5em minus 0.4em\relax
  Springer, 2008, pp. 381--390.

\bibitem{Wang2008}
J.~M. Wang, D.~J. Fleet, and A.~Hertzmann, ``Gaussian process dynamical models
  for human motion,'' \emph{IEEE transactions on pattern analysis and machine
  intelligence}, vol.~30, no.~2, pp. 283--298, 2008.

\bibitem{Trautman2015}
P.~Trautman, J.~Ma, R.~M. Murray, and A.~Krause, ``Robot navigation in dense
  human crowds: Statistical models and experimental studies of human--robot
  cooperation,'' \emph{The International Journal of Robotics Research},
  vol.~34, no.~3, pp. 335--356, 2015.

\bibitem{Alahi2016}
A.~Alahi, K.~Goel, V.~Ramanathan, A.~Robicquet, L.~Fei-Fei, and S.~Savarese,
  ``Social lstm: Human trajectory prediction in crowded spaces,'' in
  \emph{Proceedings of the IEEE Conference on Computer Vision and Pattern
  Recognition}, 2016, pp. 961--971.

\bibitem{Robicquet2016}
\BIBentryALTinterwordspacing
A.~Robicquet, A.~Sadeghian, A.~Alahi, and S.~Savarese, \emph{Learning Social
  Etiquette: Human Trajectory Understanding In Crowded Scenes}.\hskip 1em plus
  0.5em minus 0.4em\relax Cham: Springer International Publishing, 2016, pp.
  549--565. [Online]. Available:
  \url{http://dx.doi.org/10.1007/978-3-319-46484-8\_33}
\BIBentrySTDinterwordspacing

\bibitem{MTA}
R.~Abraham, J.~E. Marsden, and T.~S. Ratiu, \emph{Manifolds, Tensor Analysis,
  and Applications}, 3rd~ed., ser. Applied Mathematical Sciences.\hskip 1em
  plus 0.5em minus 0.4em\relax Spinger, 2009, vol.~75.

\bibitem{Leveque1992}
\BIBentryALTinterwordspacing
R.~J. LeVeque, \emph{Numerical methods for conservation laws}, 2nd~ed., ser.
  Lectures in Mathematics ETH Z\"urich.\hskip 1em plus 0.5em minus 0.4em\relax
  Birkh\"auser Verlag, Basel, 1992. [Online]. Available:
  \url{http://dx.doi.org/10.1007/978-3-0348-8629-1}
\BIBentrySTDinterwordspacing

\bibitem{Gottlieb2001}
\BIBentryALTinterwordspacing
D.~Gottlieb and J.~Hesthaven, ``Spectral methods for hyperbolic problems,''
  \emph{Journal of Computational and Applied Mathematics}, vol. 128, no. 1--2,
  pp. 83 -- 131, 2001, numerical Analysis 2000. Vol. VII: Partial Differential
  Equations. [Online]. Available:
  \url{http://www.sciencedirect.com/science/article/pii/S0377042700005100}
\BIBentrySTDinterwordspacing

\bibitem{FreyDueck2007}
B.~J. Frey and D.~Dueck, ``Clustering by passing messages between data
  points,'' \emph{Science}, vol. 315, no. 5814, pp. 972--976, 2007.

\bibitem{Morris2009}
B.~Morris and M.~Trivedi, ``Learning trajectory patterns by clustering:
  Experimental studies and comparative evaluation,'' in \emph{Computer Vision
  and Pattern Recognition, 2009. CVPR 2009. IEEE Conference on}.\hskip 1em plus
  0.5em minus 0.4em\relax IEEE, 2009, pp. 312--319.

\bibitem{Lee2007}
J.-G. Lee, J.~Han, and K.-Y. Whang, ``Trajectory clustering: a
  partition-and-group framework,'' in \emph{Proceedings of the 2007 ACM SIGMOD
  international conference on Management of data}.\hskip 1em plus 0.5em minus
  0.4em\relax ACM, 2007, pp. 593--604.

\bibitem{Dubuisson1994}
M.-P. Dubuisson and A.~K. Jain, ``A modified hausdorff distance for object
  matching,'' in \emph{Pattern Recognition, 1994. Vol. 1-Conference A: Computer
  Vision \& Image Processing., Proceedings of the 12th IAPR International
  Conference on}, vol.~1.\hskip 1em plus 0.5em minus 0.4em\relax IEEE, 1994,
  pp. 566--568.

\end{thebibliography}

\end{document}